\documentclass[sigconf]{acmart}
\usepackage{algorithm}
\usepackage[noend]{algpseudocode}
\usepackage{graphicx}
\usepackage{amsmath}
\captionsetup[table]{skip=6pt}
\usepackage{subcaption}
\usepackage{colortbl}
\usepackage{bm}
\usepackage{pifont}
\newcommand{\cmark}{\ding{51}}%
\newcommand{\xmark}{\ding{55}}%
\usepackage{capt-of}
\usepackage{wasysym}
\usepackage{diagbox}
\usepackage{float}
\usepackage{multirow}
\usepackage{threeparttable}
\usepackage{enumitem}
\newcolumntype{A}{>{\centering\arraybackslash}p{0.12\columnwidth}}
\newcolumntype{B}{>{\centering\arraybackslash}p{0.07\columnwidth}}
\newcolumntype{C}{>{\centering\arraybackslash}p{0.04\columnwidth}}
\newcolumntype{X}{>{\centering\arraybackslash}p{0.35\columnwidth}}
\newcolumntype{Y}{>{\centering\arraybackslash}p{0.14\columnwidth}}
\newcolumntype{Z}{>{\centering\arraybackslash}p{0.12\columnwidth}}
\newcolumntype{D}{>{\centering\arraybackslash}p{0.16\columnwidth}}
\newcolumntype{E}{>{\centering\arraybackslash}p{0.28\columnwidth}}
\newcolumntype{F}{>{\centering\arraybackslash}p{0.14\columnwidth}}

\DeclareMathOperator*{\argmin}{arg\,min}

\newcommand{\Ours}{RAGUEL}

\newcommand{\Break}{\State \textbf{break} }
\newcommand{\x}{\mathbf{x}}
\newcommand{\xx}{\mathbf{\widetilde{x}}}
\newcommand{\XX}{\widetilde{X}}
\newcommand{\B}{\mathcal{B}}
\newcommand{\mr}[2]{\multirow{#1}{*}{#2}}
\newcommand{\mrb}[2]{\multirow{#1}{*}{\textbf{#2}}}

\newcommand{\mcb}[2]{\multicolumn{#1}{c}{\textbf{#2}}}

\algnewcommand{\IfThenElse}[3]{
  \State \algorithmicif\ #1\ \algorithmicthen\ #2\ \algorithmicelse\ #3}
\newcommand{\para}[1]{\textit{#1.}}

\theoremstyle{definition}
\newtheorem{definition}{Definition}[section]

\copyrightyear{2022}
\acmYear{2022}
\setcopyright{acmcopyright}\acmConference[CIKM '22]{Proceedings of the 31st ACM International Conference on Information and Knowledge Management}{October 17--21, 2022}{Atlanta, GA, USA}
\acmBooktitle{Proceedings of the 31st ACM International Conference on Information and Knowledge Management (CIKM '22), October 17--21, 2022, Atlanta, GA, USA}
\acmPrice{15.00}
\acmDOI{10.1145/3511808.3557424}
\acmISBN{978-1-4503-9236-5/22/10}

\usepackage{xparse}
\ExplSyntaxOn
\NewDocumentCommand{\tabHeader}{m}
 {
  \seq_set_from_clist:Nn \l_tmpa_seq { #1 }
  \seq_set_map:NNn \l_tmpa_seq \l_tmpa_seq { \textbf{##1} }
  \seq_use:Nn \l_tmpa_seq { & }
 }
\ExplSyntaxOff

\begin{document}

\title{RAGUEL: Recourse-Aware Group Unfairness Elimination}

\author{Aparajita Haldar}
\affiliation{
  \institution{University of Warwick}
  \city{Coventry}
 \country{United Kingdom}
}
\email{aparajita.haldar@warwick.ac.uk}

\author{Teddy Cunningham}
\affiliation{
  \institution{University of Warwick}
 \city{Coventry}
 \country{United Kingdom}
}
\email{teddy.cunningham@warwick.ac.uk}

\author{Hakan Ferhatosmanoglu}
\authornote{Also with Amazon Web Services. This publication presents work performed at the University of Warwick and is not associated with Amazon.}
\affiliation{%
  \institution{University of Warwick}
  \city{Coventry}
  \country{United Kingdom}
}
\email{hakan.f@warwick.ac.uk}

\renewcommand{\shortauthors}{Haldar, Cunningham, and Ferhatosmanoglu}

\begin{abstract}
While machine learning and ranking-based systems are in widespread use for sensitive decision-making processes (e.g., determining job candidates, assigning credit scores), they are rife with concerns over unintended biases in their outcomes, which makes algorithmic fairness (e.g., demographic parity, equal opportunity) an objective of interest. 
`Algorithmic recourse' offers feasible recovery actions to change unwanted outcomes through the modification of attributes. 
We introduce the notion of ranked group-level recourse fairness, and develop a `recourse-aware ranking' solution that satisfies ranked recourse fairness constraints while minimizing the cost of suggested modifications. 
Our solution suggests interventions that can reorder the ranked list of database records and mitigate group-level unfairness; specifically, disproportionate representation of sub-groups and recourse cost imbalance. 
This re-ranking identifies the minimum modifications to data points,
with these attribute modifications weighted according to their ease of recourse.
We then present an efficient block-based extension that enables re-ranking at any granularity (e.g., multiple brackets of bank loan interest rates, multiple pages of search engine results).
Evaluation on real datasets shows that, while existing methods may even exacerbate recourse unfairness, our solution -- {\Ours} -- significantly improves recourse-aware fairness.
{\Ours} outperforms alternatives at improving recourse fairness, through a combined process of counterfactual generation and re-ranking, whilst remaining efficient for large-scale datasets.
\end{abstract}

\begin{CCSXML}
<ccs2012>
   <concept>
       <concept_id>10002951.10003317.10003338</concept_id>
       <concept_desc>Information systems~Retrieval models and ranking</concept_desc>
       <concept_significance>500</concept_significance>
       </concept>
 </ccs2012>
\end{CCSXML}

\ccsdesc[500]{Information systems~Retrieval models and ranking}

\keywords{Fairness; Ranking; Algorithmic Recourse; Recourse-Aware Ranking; Classification; Machine Learning}

\maketitle

\section{Introduction}
\label{s:intro}

Machine learning techniques are being used increasingly for a wide range of decision-making.
This includes classification-based decisions, such as medical diagnoses \cite{harper2005review}, judicial verdicts \cite{capuano2014methodology}, financial risk assessments \cite{sarkar2018robust}, as well as ranking-based decisions, such as exam results \cite{valenti2003overview} and job applications \cite{paparrizos2011machine}.
Despite their influence on the real world, automated decision-making systems have come under scrutiny for potentially unfair outcomes between sub-groups separated based on protected data attributes, such as race, gender, and marital status~\cite{hajian2016algorithmic, corbett2017algorithmic,venkatasubramanian2019algorithmic,jin2020mithracoverage,kang2021multifair,bhargava2020limeout}. 
The concept of fairness is applicable more broadly, including technical settings such as fair resource allocation in computer networks~\cite{fairresourceallocation} and fair task assignment in crowdsourcing~\cite{basik2018fair}.

Numerous definitions of algorithmic fairness have gained popularity, each of which has associated bias mitigation strategies, such as ignoring protected attributes in the model and enforcing balanced success/error rates across sub-groups~\cite{mehrabi2021survey,zhang2021omnifair,pitoura2021fairness}. 
However, none of the traditional unfairness mitigation strategies for ranking address the impact of imbalances in `algorithmic recourse'. 
Recourse aims to provide users with a set of feasible actions that can be taken to recover from an unwanted outcome~\cite{ustun2019actionable}.
Recourse fairness implies that the opportunity for, and cost of, improvement or recovery should not favor any sub-group over another.
Even if a model is fair in its expected outputs (e.g., by ensuring equal success rates through demographic parity), failing to consider recourse can lead to issues such as inequity across society 
due to imbalances in the cost of recovery.
`Group-level recourse fairness' (i.e., minimizing the difference in recourse across groups) is thus desirable in many settings, and has been included in recent classification models \cite{gupta2019equalizing, von2020fairness}.
Recourse fairness in ranking remains unexplored despite a range of real-world applications where entities should enjoy comparable costs of recovery to improve their ranking (e.g., job applicants, web search results, online dating profiles, e-commerce product listings~\cite{biega2018equity, singh2018fairness, ghizzawi2019fairank}). 
Recourse-aware ranking is thus beneficial to identify the minimal adjustments that individuals need to make so that there is fairer representation throughout the ranked list.
For example, measures have been prescribed to help low-income individuals demonstrate creditworthiness based on steady income rather than owning credit cards~\cite{bank1997closing}. 
There is a need to identify which of these alternatives is most suitable to offer to a given individual to improve their chances for recourse.

In this paper, we introduce the notion of \textit{ranked} group-level recourse fairness, which can be applied to classification models and ranking-based problems.
For ranking problems, there is a recourse cost to reach some defined `ideal' point that all database members aspire to achieve, whereas for classification problems, it is the cost to reach the classifier boundary, modeled as a hyperplane of target query points that have the desired classifier outcome.

We propose a strategy that ranks records according to each record's (recourse) cost to reach a target point and, given any such ranked list, offers improved ranked recourse fairness with minimal re-ranking.
Thus, we aim to minimize the cost of adjustment while satisfying fairness constraints through our re-ranking strategy,
{\textbf{\Ours}}: \textbf{R}ecourse-\textbf{A}ware \textbf{G}roup \textbf{U}nfairness \textbf{El}imination.  {\Ours} has two steps: i) computing the `ranked group-level recourse fairness ratio' for sub-groups based on a given set of protected attributes (e.g., gender), and ii) adjusting the ranked list such that recourse fairness is improved across sub-groups. 
To achieve the former, {\Ours} determines `counterfactuals' and aggregates the group-wise cost of recourse to reach the counterfactual(s). 
Here, we use the general term `counterfactual' to denote any target point into which a record can feasibly be transformed.
For the latter, {\Ours} first identifies the disadvantaged data points that require minimal perturbation towards their counterfactuals. 
{\Ours} then iteratively adjusts the attributes of a point until its new rank satisfies fair representation and recourse fairness at the group-level.
{\Ours} recommends these changes to clients as potential recourse steps.
Therefore, our solution considers recourse during the point selection process as well as the re-ranking process to minimize the cost of the recommended interventions. 
Unlike many existing strategies~\cite{gupta2019equalizing,salimi2019interventional,salimi2020database}, {\Ours} does not require re-weighting or retraining the model after database repair.



We then extend {\Ours} to handle multiple ranked `blocks', which occurs when multiple batches or brackets are used in ranking or classification outcomes.
For example, 
loan application screening may require multiple different acceptance thresholds corresponding to different interest rate brackets offered. 
Similarly, information retrieval and recommendation systems typically present results in batches, such as search engine results pages.
For such situations, one can model each bracket/batch to have its own threshold boundary, and divide the ranking into multiple blocks to match these boundaries.
In the block-based approach, points are re-ranked to improve proportional representation and ensure ranked group-level recourse fairness of sub-groups within every block. 

The contributions of {\Ours} are summarized as follows.

\para{1. Ranked Group-Level Recourse Fairness}
We introduce `ranked group-level recourse fairness', which measures fairness in recourse actions in ranked lists; alongside the need for fair representation in ranking, this forms the basis for our problem. 
In contrast to existing fair ranking approaches, {\Ours} i) provides a set of feasible actions to the \textit{user} (e.g., loan applicant) that would result in the desired outcome, 
and ii) enables the \textit{platform} (e.g., bank) 
to consider recourse fairness via minimal cost of opportunities presented to disadvantaged sub-groups.

\para{2. Recourse-Aware Fair Ranking}
{\Ours}'s iterative and minimally invasive approach considers the cost associated with the re-ranking while improving recourse fairness. The experiments show an improvement on recourse fairness by
15\% compared to the initial ranking, by up to 200\% compared to traditional fair classifiers (which actually exacerbate recourse unfairness due to the differences in objectives), and by 37\% compared to the fair ranking method FA*IR~\cite{zehlike2017fa}.

\para{3. Efficient Counterfactual Generation and Re-Ranking}
When multiple counterfactual points exist (e.g., when there is a classifier decision boundary), our inverse classification-based solution is around 10x faster on large datasets compared to the baseline~\cite{ustun2019actionable}.
{\Ours} also re-ranks large datasets with ease, whereas FA*IR~\cite{zehlike2017fa} and FoEiR~\cite{singh2018fairness} could not handle more than 400 and 50 records respectively in practice.

\para{4. Block-Based Solution}
To the best of our knowledge, no prior ranking solution adjusts the granularity of the fairness measure. The proposed block-based re-ranking better preserves similarity to the original list compared to the non-block version.
Besides practical use cases, {\Ours}-Block shows significant performance benefits.
It requires fewer modifications to the ranked list with a 1.3x lower cost for achieving recourse fairness, and it is up to 30x faster in unfairness mitigation in our experiments. 
The solution is shown to be scalable for datasets with millions of records.


\section{Related Work}
\label{s:related-work}

We now review recent literature that relates to our problem.
Table~\ref{tab:related-work} summarizes the main limitations of the most relevant work.

\begin{table}[t]
\small
\addtolength{\tabcolsep}{-0.02\columnwidth}
\centering
\caption{Summary of limitations in related work}
\label{tab:related-work}
\begin{tabular}{XZZZZY}
\toprule
\mrb{2}{Desideratum} 
& \textbf{MACE~} & \textbf{AR~ } & \textbf{FA*IR~ } & \textbf{FoEiR~} & \mrb{2}{{\Ours}} \\
& \textbf{\cite{karimi2020model}} & \textbf{\cite{ustun2019actionable}} & \textbf{\cite{zehlike2017fa}} & \textbf{\cite{singh2018fairness}} & \\
\midrule
Counterfactual generation   & \cmark & \cmark & \xmark & \xmark & \cmark \\
Distance-agnostic   & \cmark & \cmark & \xmark & \cmark & \cmark \\
Fair ranking   & \xmark & \xmark & \cmark & \cmark & \cmark \\
Recourse consideration   & \xmark & \cmark & \xmark & \xmark & \cmark \\
Adjustable granularity   & \xmark & \xmark & \xmark & \xmark & \cmark \\
\bottomrule
\end{tabular}
\vspace{-0.5cm}
\end{table}


There has been an increased focus on understanding whether, and how, inherent biases result in decision-making systems being unfair to individuals or groups~\cite{barocas2017fairness,corbett2018measure,mehrabi2021survey}. 
There have been several unfairness mitigation measures, such as simply omitting sensitive attributes and pre-processing the data to mask such attributes~\cite{kamiran2009classifying,feldman2015certifying}.
Counterfactual fairness and explicit causal models capture the intuition that a decision outcome should not be influenced by sensitive attributes~\cite{kusner2017counterfactual,zhang2018fairness,wang2019repairing}. 
Interventions are used to reassign specific values to attributes of points to generate counterfactual explanations of models by asking ``what if things had been different?''. 
The term `counterfactual' here indicates a point having different attribute values, leading to a different model prediction. 
LIME~\cite{ribeiro2016should} offers local explanations for individual predictions of black-box models. 
CLEAR~\cite{white2019measurable} extends this to compute the nearest counterfactuals and measure their fidelity to the underlying model. 
MACE~\cite{karimi2020model} generates plausible and diverse nearest counterfactuals in a model-agnostic manner.

`Recourse' is defined 
as an individual's ability to change their outcome by altering their attributes, in a recovery process that is similar to the interventions to generate counterfactuals~\cite{ustun2019actionable}. 
However, most counterfactual generation methods do not incorporate the cost (financial or otherwise) of these recovery actions.
\citet{ustun2019actionable} focus on the viability of providing recourse to individuals based on the interventions required and the effect of immutable attributes. 
Their study has inspired a number of strategies for equalizing recourse between sub-groups in classification models. 
One approach re-weights groups with large recourse costs relative to groups with small recourse costs~\cite{gupta2019equalizing}. 
Another method applies causal-based ``equality of effort''~\cite{huan2020fairness} for potential ``discrimination removal'' by adding new optimization constraints to the classifier.
\citet{karimi2020algorithmic} explore probabilistic methods to determine recourse and counterfactuals given limited causal knowledge.

Fairness in ranked lists has attracted attention with a focus on mitigating the position bias that leads to unfair representation in ranking~\cite[e.g.,][]{zehlike2017fa, biega2018equity, singh2018fairness, ghizzawi2019fairank,mehrotra2018towards}.
FA*IR~\cite{zehlike2017fa} is a top-$k$ ranking algorithm that aims to ensure that the proportion of protected individuals in every subset of a top-$k$ ranking remains above some minimum threshold while maintaining the utility of the ranking.
\citet{singh2018fairness} compare exposure allocation with query relevance, motivated by different fairness constraints, 
to examine the utility trade-offs. 
Similarly, ``equity of attention'' aims to optimize fairness versus utility trade-offs by amortizing fairness accumulated across a series of rankings~\cite{biega2018equity}. 

We introduce the notion of recourse fairness for ranked lists by imposing a constraint to ensure `ranked group-level recourse fairness'. 
%
Current methods (e.g., FA*IR~\cite{zehlike2017fa}, FoEiR~\cite{singh2018fairness}) consider utility trade-offs without incorporating the costs incurred during re-ranking. 
We also introduce a block-based solution for recourse-aware re-ranking. 
This approach simultaneously processes multiple blocks, each with a different acceptance threshold.

\section{Fair Recourse-Aware Ranking}
\label{s:methodology}
Here, we introduce necessary notation and define the fairness constraints for our problem before outlining our re-ranking solution {\Ours}, 
its block-based variant, and extensions to our work.

\subsection{Preliminaries}
\label{ss:preliminaries}
We define a database $\mathcal{D}$, with $D$ records, in which each record is represented as a vector $\x \in \mathbb{R}^m$ with $m$ attributes such that $\x = \{x_1, x_2, \dots, x_m\}$.
%
A fundamental part of our setting is the notion of recourse cost, which is 
the difficulty of changing a prediction by taking feasible actions to alter attribute values~\cite{ustun2019actionable}.
When we extend this to ranked lists, recourse cost is the difficulty of the actions to reach some ideal point or hyperplane (e.g., the top of the ranking).
This chosen target is the counterfactual point $\x'$, calculated as: $\x' = \argmin_{\x^* \in \mathcal{X}^*} d(\x, \x^*)$. 
Here, $d(\x,\x^*)$ denotes the distance function that quantifies the cost of the actions required to transfer from $\x$ to $\x^*$.
That is, given a set of candidate counterfactual points $\mathcal{X}^*$, the counterfactual point to $\x$ is the point $\x' \in \mathcal{X}^*$ that has the minimum distance to $\x$.

For settings involving a classifier, we generate counterfactuals by considering the minimal actions to reach the decision boundary (e.g., negatively classified points trying to change their outcome).
We use $f:\mathbb{R}^m \rightarrow \{-1,1\}$ to denote the decision-making model that classifies records into two classes.
This assumption is without loss of generality since any multi-class classifier can be considered as a stack of several binary classifiers.
In this setting, $\mathcal{X}^*$ is the set of all candidate counterfactual points lying on the classifier boundary. 
For general ranking problems, we assume a single counterfactual point -- the point at the top of the ranking -- therefore we have $\mathcal{X}^* = \{\x'\}$.
This can be generalized in future work by determining recourse with respect to multiple possible counterfactual points.

The recourse cost $c(\x)$ is defined as the cost to reach $\x'$ from $\x$.
That is, $c(\x) = d(\x,\x')$.
In Example 1 (Section~\ref{ss:examples}), the `Cost' column of Table~\ref{tab:example} reflects these recourse values based on the weighted Euclidean distance to the corresponding counterfactual on the boundary line, as seen in Figure~\ref{fig:example}.
In our method for increasing fairness in representation and recourse, a chosen point, $\x$, is modified towards its counterfactual point, $\x'$, and $\xx$ denotes this modified point.

The database (or any subset thereof) can be divided into sub-groups, denoted as $S_j$, according to some protected attribute (e.g., gender, race, marital status) for which we are interested in evaluating fairness of representation and recourse.
Without loss of generality and as is common practice in literature~\cite{pitoura2021fairness,huan2020fairness}, we illustrate our approach for two sub-groups, $S_1$ and $S_2$, and they are composed such that $|S_1 \cup S_2| = D$ and $|S_1 \cap S_2| = 0$.
In any pair of sub-groups, where one sub-group contains records with a protected attribute, $p$ denotes the proportion of the global database with this protected characteristic.
That is, $p = |S_2|/D$, assuming $S_2$ contains the records associated with the protected characteristic. 
This represents the ideal proportion of protected records in any subset of the database.

$X_k$ denotes some ordered subset of $\mathcal{D}$ containing $k\!\leq\!D$ points and the
proportion of $X_k$ that consists of records with the protected attribute is $p_k$.
%
Finally, in our block-based approach, we segment the database into $B$ blocks, with each block $b$ as part of the set $\mathcal{B}$.


\subsection{Fairness}
\label{ss:fairness}
We first introduce the simplest definition of fairness given two sub-groups, which ensures that their proportional representation in any subset correctly reflects that in the database.

\begin{definition}{\textbf{Fair Representation}}
\label{def:repr}
Any subset $X \subseteq \mathcal{D}$ represents the protected sub-group fairly if, given some tolerance $\epsilon$, $|p_X - p| \leq \epsilon$, where $p_X$ is the proportion of protected records in $X$. 
\end{definition}

Given an ordered subset $X_k$, we can extend this definition to recursively hold for ordered subsets of the database.

\begin{definition}{\textbf{Ranked Group-level Fair Representation}}
\label{def:rankfair}
An ordered subset $X_k$ satisfies ranked group-level fair representation if, for every $X_i$ (where $1 \leq i < k$, $k \geq 2$), it satisfies fair representation. Any singleton set $X_1$ is considered un-ranked and always fair.
\end{definition}

Definition~\ref{def:rankfair} is a reformulation of the sufficient condition for ranked group fairness in FA*IR~\cite{zehlike2017fa}.
However, while this definition aims to achieve balanced representation for every subset $X_i$, it does not take the cost of recourse into account.
Therefore, we propose `ranked group-level recourse fairness' as a constraint in the ranking.

To measure ranked fairness with respect to recourse, we first formalize the notion of recourse fairness between two sub-groups.
The `group-level recourse fairness ratio', $r$, assesses how balanced two sub-groups are in terms of the average cost of recovery actions:
\begin{equation}
\textstyle
  \label{eq:af}
    r = \frac{\min(M_1, M_2)}{\max(M_1,M_2)} \qquad 0 \leq r \leq 1
\end{equation}
where $M_1$ and $M_2$ are the mean recourse costs for $S_1$ and $S_2$ respectively, defined as: $M_j = \frac{1}{|S_j|}\sum_{\x_i \in S_j}{c(\x_i)}$.
Near-ideal fairness is represented by $r$-values close to 1, as the mean recourse costs for both sub-groups are similar, 
whereas $r$-values close to zero show that one sub-group is severely disadvantaged when trying to recover from an unwanted outcome.
%
%

%
%
\begin{definition}{\textbf{Group-Level Recourse Fairness}}
\label{def:group-recourse-fair}
Any subset $X \subseteq \mathcal{D}$ satisfies group-level recourse fairness if $r_i \geq 1 - \phi$.
\end{definition}

Recourse fairness therefore ensures that the two sub-groups maintain a ratio of their mean recourse costs that is within some tolerance $\phi$ (distinct from $\epsilon$).
This may be extended to an ordered subset $X_k$ in a manner similar to Definition~\ref{def:rankfair}, as described below.
\begin{definition}{\textbf{Ranked Group-Level Recourse Fairness}}
\label{def:actionfair}
An ordered subset $X_k$ satisfies ranked group-level recourse fairness if $r_i \geq 1 - \phi$, for every $X_i$ (where $1 \leq i < k$). 
For any singleton set $X_1$, we always have $r_1 = 1$.
\end{definition}
\noindent Definition~\ref{def:actionfair} acts as a constraint for balancing recourse, in addition to Definition~\ref{def:rankfair}, which balances demographic parity.

\subsection{Problem Statement}
\label{ss:problem-statment}
Given a dataset $\mathcal{D}$, we first obtain the ordered set $X_D$ by ranking records according to their recourse costs to reach the chosen counterfactual points. 
The goal is to identify the actions needed to construct an ordered set of modified points $\XX_D$ such that $\XX_D$ satisfies Definitions~\ref{def:rankfair} and \ref{def:actionfair}, whilst ensuring that we minimize the extent of the modification, i.e., $\min \frac{1}{D}\sum_i d(\x_i, \xx_i)$. 


\subsection{Examples}
\label{ss:examples}
We illustrate our solution in the context of loan applications.
We also discuss how {\Ours} can be applied to ranking-based problems such as fair webpage ranking.

\begin{table}[t]
\begin{threeparttable}
    \captionof{table}[t]{Original and re-ranked example data points}
    \label{tab:example}
    \setlength{\tabcolsep}{0.25\tabcolsep}
    \begin{tabular}{AC|BBBB|BB|BBBB}
        \toprule
        \multicolumn{2}{D|}{\textbf{Name and}} & \multicolumn{4}{E|}{\textbf{Before Re-Ranking}} & \multicolumn{2}{F|}{\textbf{Counter-factual}}& \multicolumn{4}{E}{\textbf{After Re-Ranking}}\\
        \cmidrule{3-12}
        \multicolumn{2}{D|}{\textbf{Gender}} & \textbf{LA} & \textbf{LD} & \textbf{Cost} &  \textbf{Rk.} & \textbf{LA} & \textbf{LD} & \textbf{LA} & \textbf{LD} &  \textbf{Cost} & \textbf{Rk.}\\
        \midrule
        Abdul  & M  & 3.5 & 6 & 0.33 & 1 & 3.06 & 6.11 & 3.5 & 6 & 0.33 & 1 \\
        Bogdan & M  & 2   & 1 & 1.00 & 2 & 0.67 & 1.33 & 2   & 1 & 1.00 & 3 \\
        Chiara & F+ & 4  & 4 & 1.33 & 3 & 2.22 & 4.44 & \textbf{3.45} & \textbf{4} & \textbf{0.97} & \textbf{2} \\
        Diana  & F+  & 5   & 4 & 2.00 & 4 & 2.33 & 4.67 & 5   & 4 & 2.00 & 4 \\
        \bottomrule
        \end{tabular}
        \begin{tablenotes}
      \footnotesize
      \item M = male, F+ = female and all other gender identities, LA = loan amount in \$'000, LD = loan duration in years, Cost = recourse cost, Rk. = rank 
    \end{tablenotes}
  \end{threeparttable}
\end{table}



\para{\textbf{1. Recourse-aware Loan Evaluations}}
Consider four loan applicants who have been rejected and placed onto a ranked waiting list. 
Abdul (rank \#1) and Bogdan (\#2) identify as male whereas Chiara (\#3) and Diana (\#4) identify differently.
Table~\ref{tab:example} and Figure \ref{fig:example} show this ranked list and the subsequent recourse-fair re-ranking after applying {\Ours}. 
In this example, the recourse cost is the cost for negatively classified points to reach their counterfactual points on the classifier boundary (i.e., the loan being granted) considering only the loan amount (LA) and loan duration (LD). 
Each applicant (\textit{user}) is capable of specific actionable changes, such as changing the requested loan amount in \$50 increments, or changing the duration of the requested loan in one year increments.
Initially, despite fair representation, the average cost of recourse is higher for non-male customers, which may be due to them facing greater barriers towards gaining credit (as previously claimed in the real world \cite[e.g., ][]{Garrison1976, Ladd1982, Alesina2013}).
As a result, the bank (\textit{platform}) may be interested in making conditional loan offers to lessen the extent of this disparity, subject to regulations. 
The platform's goal here would be to re-rank the list to achieve recourse fairness via minimal interventions (e.g., for widening participation of disadvantaged customers, or for
inclusive representation due to policies and regulations).
Since the top of the ranking shows disproportionate under-representation of female customers, 
{\Ours} identifies potential actions for Chiara (e.g., providing proof of her ability to cover \$550 from other sources) that would move her up in the ranked list (e.g., a better chance of acceptance with a slightly smaller loan amount offered).
In doing so, the bank improves ranked recourse fairness by enabling the disadvantaged group to perform actions to meet their conditional acceptance criteria.
An alternative outcome is that the bank chooses to offer loans to applicants from the list (e.g., due to an availability of more funds), where candidates higher up in the list (now including Chiara) would secure their requested loans. 
{\Ours} provides a mechanism for both fair representation and fair recourse for all sub-groups with minimal re-ranking.
\begin{figure}[tb]
\centering
 \includegraphics[width = 0.9\columnwidth]{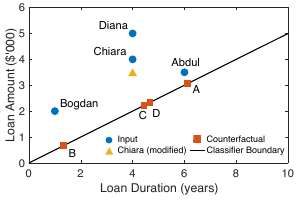}
 \vspace{-0.3cm}
  \captionof{figure}{Example data points plotted}
  \label{fig:example}
\end{figure}

To utilize the block-based solution, the bank can generate equally-sized `blocks' containing two individuals each, with individuals from the first block (Abdul and Bogdan) being offered a lower interest rate. 
Chiara might be offered conditional acceptance at the lower rate, provided she requests a shorter loan amount, moving her into the first block. 
If the bank has blocks containing four individuals, the block would already appear balanced with respect to gender, and no recourse would need to be offered.

\para{\textbf{2. Recourse-Aware Webpage Rankings}}
{\Ours} can be applied to any ranked list, such as top-$k$ query results. 
Consider a set of web-pages (\textit{users}), all of which would like to be listed on the first page of the search engine (\textit{platform}) results (i.e., the top-10 web-pages).
Unlike the case of loan applications, there is no classifier boundary, and the counterfactual is the query itself.
Webpages are ranked by their recourse costs, derived simply based on the difficulty of reaching the counterfactual.
It is in the search engine's best interests to strive for ``fair'' ranking across all categories of web-pages (e.g., business, education, entertainment), while the organizations running the web-pages would like to know what recourse options they have to move up in the ranking (e.g., to help guide their search engine optimization). 
Considering recourse costs helps to identify the interventions (e.g., mobile-friendly formatting, faster page loading, adding hyperlinks) that can enable re-ordering with minimal cost.
If the search engine were only concerned with achieving recourse fairness on each page of results, the less granular block-based solution can be applied, with each block representing a page of search engine results.

\subsection{Methodology}
\label{ss:methodology}
{\Ours} produces a `recourse-aware ranking' that corrects group-level imbalances in terms of recourse and representation.
The first step computes the recourse cost for each record to reach the relevant counterfactual point.
Second, the records are ranked, in ascending order, according to their recourse costs.
{\Ours} then identifies the minimal interventions applied to points such that the re-ranked list mitigates (or eliminates) any (group-level) unfairness.
{\Ours} can also be extended to a block-based approach in which the ranking is subdivided into a finite number of blocks.

\subsubsection{Identifying Counterfactuals and Recourse Costs}
\label{sss:counterfactuals}
In the classification setting, such as Example 1, we treat the decision boundary as a query and use integer programming-based inverse classification~\cite{aggarwal2010inverse,lash2017generalized} to identify the feasible actions for a point to reach the boundary and flip its classification outcome.
Scenarios where decisions are influenced by multiple factors (e.g., where the classification boundary is a hyperplane) require non-trivial solutions to efficiently find the counterfactual points for large datasets.
By using generalized inverse classification, it is possible to include bounds on the allowable interventions, add a sparsity constraint that ensures fewer attributes are changed, and support non-linearity in recourse weights~\cite{lash2017generalized,laugel2018comparison}. 
Minimizing the cost of such interventions thereby produces the counterfactual point.
For the general case of ranking, such as Example 2, we identify the interventions needed for each record to reach the top of the ranking, the nearest representative, or other appropriately defined counterfactual points.

We use weighted Euclidean distance in our experiments.
This assumption is useful where no information exists about the classifier or ranking process, and only the `recourse weight' for each attribute is known. 
Thereby, utility can be maintained by minimizing the cost of recourse interventions.
Any other function for $d(\x, \x')$ may be used, if available.
Hence, for each $\x_i\!\in\!\mathcal{D}$, our counterfactual $\x_i'$ is computed as:
\begin{equation}
    \textstyle
    \x_i' = \argmin_{\x_j^* \in \mathcal{X}^*} \sqrt{\sum_{k=1}^{m} w_k |x^*_{j,k}-x_{i,k}|^2}
\end{equation}
The recourse cost $c(\x_i)$ is thus $c(\x_i) = d(\x_i, \x_i')$.
%
Recourse weights ($w_k$) can be user-defined, assigned by experts, or learned from data.
These weights are customizable to reflect the ease with which certain actions can be taken, and should be normalized. 
In Example~1, decreasing the requested loan amount ($w_{LA} = 0.5$) is easier than increasing the duration of the loan ($w_{LD} = 1$).
One cannot change immutable attributes (e.g., race) or conditionally immutable attributes (e.g., marital status), which is reflected in the weights.




\subsubsection{Recourse-Aware Re-Ranking}
\label{ss:re-rank}
The next step is to apply interventions to ensure fairness in the ranked list. 
Definition~\ref{def:actionfair} helps to identify minimal modifications that can re-rank the records towards more balanced group-level recourse costs alongside other fairness constraints. 
In this way, {\Ours} improves group-level recourse fairness by suggesting actions that expend minimal resources. 

The steps are outlined in Algorithm~\ref{alg:recourse-aware-rerank}. 
We start by defining $\XX$, which grows one record at a time and will eventually represent the ranked database of modified points (Lines 2--3).
If $X_i$ satisfies Definition~\ref{def:rankfair}, $\x_i$ is added to $\XX$ (i.e., we maintain the status quo)  (Lines 4--5). 
Otherwise, as the addition of $\x_i$ would result in $\XX$ no longer satisfying ranked fair representation, we consider substituting $\x_i$ for a lower ranked point.
Substitution is possible if interventions on a lower ranked point decrease its recourse cost such that it gains a higher rank than $\x_i$.
Hence, the algorithm iterates through the remaining records and, if Definition \ref{def:rankfair} is satisfied by one of these points $\x_j$, it proceeds to modification (Lines 6-8).
Modification occurs by iteratively and minimally modifying attribute values of $\x_j$ into $\xx_j$ until Definitions~\ref{def:rankfair} and \ref{def:actionfair} are satisfied by the resulting new ordered subset of points (Line 9-11).

Attributes are modified in order of their recourse weights to prioritize interventions with lowest recourse weights.
This is because sparsity of the recourse vector can make recourse options more understandable to users~\cite{laugel2018comparison}.
If no amount of modification leads to the desired ranking, the attribute is reset to its original value and the next feasible attribute is modified, and so on, followed by combinations of attributes in the same order, if necessary.
That is, if three attributes (A, B, C) exist with $w_A < w_B < w_C$, modifications are attempted in the following order: A, B, C, A\&B, A\&C, B\&C, and A\&B\&C. 
In Example 1, updating Chiara's loan amount in \$50 increments leads to the given re-ranking with minimal cost of intervention.
In the unlikely event that no re-ranking is possible at any stage (e.g., when tolerance margins are tight), the algorithm fails to satisfy the fairness constraints and copies the remaining ranked list as is. This exit strategy is rarely needed (on average, 10\% of the blocks at the strictest tolerance setting in our experiments require it).
At any stage during re-ranking, we expect at least $p-\epsilon$ points to be available to be re-ranked since this proportion was successfully maintained by the fairness constraint so far. 

\begin{algorithm}[t]
\small
\algrenewcommand\algorithmicindent{0.75em}%
\begin{algorithmic}[1]
\Function{Re-ranking}{$\mathcal{D}$}
\State $\XX \gets$ \O
    \For{$1 \leq i < D$}
        \If{$X_i \not\models$ Definition~\ref{def:rankfair}}
            \State $\XX \gets \XX \cup \x_i$
        \Else
            \For{$i+1 \leq j \leq D$}
                \If{$\XX \cup \x_{j} \models$ Definition~\ref{def:rankfair}}
                    \State Modify $\x_{j}$ into $\xx_j$ until $\XX \cup \xx_{j} \models$ Definition~\ref{def:actionfair}
                    \State $\XX \gets \XX \cup \xx_j$
                    \Break
                \EndIf
            \EndFor
        \EndIf
    \EndFor
\EndFunction
\end{algorithmic}
\caption{Recourse-Aware Re-Ranking}
\label{alg:recourse-aware-rerank}
\end{algorithm}



\subsubsection{Block-Based Recourse-Aware Re-Ranking}
\label{sss:block-based-rerank}
Until now, {\Ours} has focused on ranked lists with recourse costs relative to a \textit{single} query (e.g., classifier boundary, top-$k$ query). 
There are applications where the ranking needs to be done with respect to \textit{multiple} queries that have an inherent ranking (e.g., interest rate brackets in Example 1, search engine result pages in Example 2). 
To enable diversity in the set of records that are offered recourse interventions, it is useful to consider fairness within groups of points that are similar/clustered with respect to a classification model or search query, by considering these as independent blocks for re-ranking.
Also, it is often not possible to modify an attribute by the precise amount needed to re-rank it such that Definition~\ref{def:rankfair} is satisfied. 
In these cases, a block-based approach is more suitable as it can handle multiple queries, compare clusters separately, and perform less granular re-ranking.

Algorithm~\ref{alg:block-based} outlines the block-based re-ranking process that handles these cases.
As a pre-processing step, we subdivide the ranking into a set of blocks, $\mathcal{B}$.
If there are $B$ blocks each with $n$ records, $b_1$ contains the first $n$ records, $b_2$ contains the records with ranks $n\!+\!1$ to $2n$, etc.
In our experiments, the number of blocks determines the number of points in each block, which is kept constant throughout re-ranking.
We assume that recourse costs within a block (and thus the ranking order) are close enough to one another that we can focus on satisfying our fairness conditions at the less-granular block-level.
And so, although irregularly sized blocks (e.g., variable-width histogram bins, clusters) can be used, the recourse cost range within each block should be minimized to ensure high accuracy and fairness.


If $b_i$ does not satisfy fair representation (Definition~\ref{def:repr}), its lowest ranked point block is moved into $b_{i+1}$ (Lines 1--4).
The highest ranking feasible point from $b_{i+1}$ replaces it, with interventions performed to modify its ranking (Lines 4--6).

This overall approach for {\Ours}-Block is similar to Algorithm~\ref{alg:recourse-aware-rerank}, with an additional constraint, which is:
\begin{equation}
\label{eq:constraint}
    \begin{split}
    c(\xx) \leq \beta & \qquad \text{if } M_1 \leq M_2 \\
    c(\xx) \geq \beta & \qquad \text{if } M_1 \geq M_2 
\end{split}
\end{equation}
where $\bar{\x}$ is removed from $b$ and the bound, $\beta$, is defined as:
\begin{equation*}
\textstyle
    \beta = (M_1|S_1|-c(\bar{\x}))\frac{M_2}{M_1}\frac{|S_2|+1}{|S_1|-1} - M_2|S_2|
\end{equation*}
This constraint guarantees that the interventions used to improve fair representation also improve recourse fairness.

\begin{theorem}
\label{th:bound}
Minimal modification of a data point that results in improved fair representation of the block is guaranteed to improve recourse fairness of the block if the constraint in Equation \ref{eq:constraint} is met.
\end{theorem}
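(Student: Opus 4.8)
The plan is to collapse the claim into a single scalar inequality in the injected recourse cost $c(\xx)$, verify that Equation~\ref{eq:constraint} is exactly its solution set, and then argue that the resulting bound survives the one case the algebra does not settle by itself. First I would fix notation for the swap that the fair-representation step of Algorithm~\ref{alg:block-based} performs on a block $b$: it deletes the block-maximal (lowest-ranked) point $\bar{\x}$, which lies in the over-represented sub-group, which we take to be $S_1$, and re-inserts the minimally modified point $\xx$ into the under-represented sub-group $S_2$. Assuming $|S_1|\ge 2$ and $M_1,M_2>0$, the updated block means are $M_1' = (M_1|S_1|-c(\bar{\x}))/(|S_1|-1)$ and $M_2' = (M_2|S_2|+c(\xx))/(|S_2|+1)$; since $\bar{\x}$ is block-maximal, $c(\bar{\x})\ge M_1$, so $M_1'\le M_1$, and $c(\bar{\x})$ is one summand of $M_1|S_1|$, so $M_1'\ge 0$. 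By Equation~\ref{eq:af} the goal reduces to showing $r' \ge r$, where $r = \min(M_1,M_2)/\max(M_1,M_2)$ and $r' = \min(M_1',M_2')/\max(M_1',M_2')$.

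The scalar computation comes next. Take the branch $M_1\ge M_2$, so $r=M_2/M_1$ and Equation~\ref{eq:constraint} reads $c(\xx)\ge\beta$ (the branch $M_1\le M_2$ is symmetric, with the inequality reversed). I would substitute the formulas for $M_1'$ and $M_2'$ into $M_2' M_1 \ge M_2 M_1'$, clear the positive factors $|S_1|-1$ and $|S_2|+1$, and solve the resulting linear inequality for $c(\xx)$; it reduces to exactly $c(\xx)\ge\beta$ with $\beta$ as stated. This yields the clean equivalence $c(\xx)\ge\beta \iff M_2'/M_1' \ge M_2/M_1 = r$, using no ordering assumption, and at the boundary $c(\xx)=\beta$ one reads off $M_1'/M_1 = M_2'/M_2$: both means are rescaled by a common factor, so the ratio is preserved exactly. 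Hence $\beta$ is by construction the smallest amount of recourse cost that can be pushed into $S_2$ by the representation-fixing swap without degrading fairness, and this one manipulation simultaneously proves the equivalence and explains the closed form of $\beta$.

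It then remains to pass from this inequality on the means to the inequality on $r'$. If the swap preserves the ordering of the means -- the generic case when, as assumed in Section~\ref{sss:block-based-rerank}, intra-block recourse costs are close -- then $r'=M_2'/M_1'$ and the previous step gives $r'\ge r$ at once; the symmetric branch is identical. The case where the correction flips the order of $M_1$ and $M_2$ is the one that needs genuine care, because the equivalence above only controls the \emph{larger} of the two new means relative to the smaller one, not the ratio directly. Here I would use the minimality of the modification -- $\xx$ is taken from the adjacent, lower-ranked block and perturbed by the least amount that lets it displace $\bar{\x}$, so $c(\xx)\le c(\bar{\x})$ -- together with block locality to confine $M_1,M_2,M_1',M_2'$ to a common narrow band; a flip can then occur only when $M_1$ and $M_2$ already lay in that band, so $r$ is already close to $1$, and in the limiting case $M_1=M_2$ the two branches of Equation~\ref{eq:constraint} coincide and pin $c(\xx)$ to the value that equalises $M_1'$ and $M_2'$, giving $r'=1\ge r$.

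I expect this last step to be the main obstacle: the algebra in the second paragraph is routine, but upgrading ``$c(\xx)\ge\beta \iff M_2'/M_1'\ge r$'' to ``$r'\ge r$'' requires ruling out an over-correcting flip of the mean ordering, and that is exactly where the minimal-modification property and the block-closeness assumption of Section~\ref{sss:block-based-rerank} do real work. I would be explicit in the write-up that the theorem is to be read against that standing assumption: without it, one can construct swaps that satisfy Equation~\ref{eq:constraint} yet strictly worsen $r$ (for instance by allowing a point with cost far above the rest of its sub-group to dominate one of the means), so the result is not a statement of pure algebra but one about minimal interventions inside a locally homogeneous block.
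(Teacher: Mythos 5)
Your algebraic core coincides with the paper's proof: both compute the post-swap means $M_1'=\frac{M_1|S_1|-c(\bar{\x})}{|S_1|-1}$ and $M_2'=\frac{M_2|S_2|+c(\xx)}{|S_2|+1}$, solve the resulting linear relation for $c(\xx)$ as an increasing function of $M_2'/M_1'$, and identify $\beta$ as exactly the threshold at which $M_2'/M_1'=M_2/M_1$, so that Equation~\ref{eq:constraint} is equivalent to pushing the ratio of means in the desired direction. The difference is in the final step. The paper concludes directly that ``the new ratio $r'$ moves closer to 1,'' which silently assumes the correction does not overshoot: when $M_1\ge M_2$, the one-sided constraint $c(\xx)\ge\beta$ bounds $M_2'/M_1'$ from below by $M_2/M_1$ but not from above, so nothing in the algebra alone prevents $M_2'/M_1'$ from exceeding $M_1/M_2$, flipping the ordering of the means and giving $r'=M_1'/M_2'<r$ (and symmetrically for the other branch). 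You are right to single this out as the step requiring genuine care; your proposed repair --- using the minimality of the modification (so $c(\xx)\le c(\bar{\x})$) together with the Section~\ref{sss:block-based-rerank} assumption that recourse costs within a block are close, so that a flip can occur only when $r$ is already near~1 --- goes beyond what the paper provides, since the paper's proof does not treat the overshoot case at all. Your observation that at $c(\xx)=\beta$ both means rescale by a common factor is a correct sanity check also absent from the paper. In short: same route, but yours is the more careful version; the no-overshoot argument should be stated as an explicit hypothesis or a lemma under block homogeneity, because without it the theorem is not a pure consequence of Equation~\ref{eq:constraint}.
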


\begin{proof}
Consider data points $\x_i$ in a block of size $n$.
Each $\x_i \in b$ has a recourse cost $c(\x_i)$ to its respective target counterfactual point $\x_i'$.
Assume the block, $b_i$, is divided into subgroups $S_1$ and $S_2$ based on some sensitive attribute, with $S_2$ being under-represented (i.e., $|S_2| < |S_1|$). 
Within $b$, the mean costs are:
\begin{equation*}
\textstyle
    M_1 = \frac{1}{|S_1|}\sum_{\x_i \in S_1}{c(\x_i)} \qquad M_2 = \frac{1}{|S_2|}\sum_{\x_i \in S_2}{c(\x_i)}
\end{equation*}
Then, with the modified point $\xx$ (of the under-represented sub-group) from $b_{i+1}$, we aim to improve the proportion $\tfrac{|S_2|}{n}$ as per Definition~\ref{def:repr}. 
The block size is maintained, thus the lowest ranked point $\bar{\x}$ (of the over-represented sub-group) is pushed into the next block, $b_{i+1}$. 
The new mean total costs are:
\begin{equation*}
\textstyle
    M_1' = \frac{-c(\bar{\x}) + \sum_{i \in S_1}c(\x_i)}{|S_1|-1} \qquad
    M_2' = \frac{c(\xx) + \sum_{i \in S_2}c(\x_i)}{|S_2|+1}
\end{equation*}
%
In the trivial case where $|S_1| = n$ and $|S_2| = 0$, we know that $M_2 = 0$ and any choice of $\xx$ will improve the ratio $\frac{M_2'}{M_1'}$. 
In cases where $|S_1| > |S_2| > 0$, we have two situations: $\bar{\x} \in S_1$ or $\bar{\x} \in S_2$. 
The latter is not a permitted perturbation as it makes no improvement to fair representation. 
When $\bar{\x} \in S_1$, by substituting $M_1$ and $M_2$ into the expression for $\frac{M_2'}{M_1'}$, we obtain:
\begin{align*}
\frac{M_2'}{M_1'} &= 
    \frac{c(\xx) + \sum_{i \in S_2} c(\x_i)}
    {- c(\bar{\x}) + \sum_{i \in S_1} c(\x_i)}
    \times
    \frac{|S_1|-1}{|S_2|+1} \\
&=
\frac{M_2|S_2| + c(\xx)}{M_1|S_1|-c(\bar{\x})} \times \frac{|S_1|-1}{|S_2|+1}
\end{align*}
\begin{flalign*}
\textstyle
    \text{Thus, } && c(\xx) &= (M_1|S_1|-c(\bar{\x}))\frac{M_2'}{M_1'}\frac{|S_2|+1}{|S_1|-1} - M_2|S_2| & 
\end{flalign*}
Consider the bound given by: $\beta = (M_1|S_1|-c(\bar{\x}))\frac{M_2}{M_1}\frac{|S_2|+1}{|S_1|-1} - M_2|S_2|$.
If $M_1 \leq M_2$ and $c(\xx) \leq \beta$, it follows that $\frac{M_2'}{M_1'} < \frac{M_2}{M_1}$. 
Similarly, if $M_1 \geq M_2$ and $c(\xx) \geq \beta$, it follows that $\frac{M_2'}{M_1'} > \frac{M_2}{M_1}$. 
In either case, the new ratio $r' = \frac{\min(M_1', M_2')}{\max(M_1',M_2')}$ moves closer to 1 and thus recourse fairness improves within the block.
\end{proof}

Note that points can only move up/down by one block when they are re-ranked.
This restricts the permitted cost of modification, and means that the exit strategy (introduced in Section \ref{ss:re-rank}) is more relevant in {\Ours}-Block. 
In situations where no feasible modification is possible, the block is copied as is and we proceed to the next block (illustrated in Figure \ref{fig:married-plots} and Table \ref{tab:effectiveness}).

\begin{algorithm}[t]
\small
\algrenewcommand\algorithmicindent{0.75em}%
\begin{algorithmic}[1]
\Function{BlockBased}{$\mathcal{D}$, $\B$}
    \For{$1 \leq i < B$}
        \While{$b_{i} \not\models$ Definition~\ref{def:repr}}
        	\State Move last element of $b_{i}$ into $b_{i+1}$
            \State Find highest ranked $\x \in b_{i+1}$ s.t. $b_{i} \cup \x \models$ Definition~\ref{def:repr}
            \Repeat 
            \State Modify $\x$ into $\xx$ while satisfying \autoref{eq:constraint}
            \Until $\xx \in b_{i}$ and $b_{i} \cup \xx \models$ Definition~\ref{def:actionfair}
        \EndWhile
    \EndFor
\EndFunction
\end{algorithmic}
\caption{Block-Based Recourse-Aware Re-ranking}
\label{alg:block-based}
\end{algorithm}

\subsection{Extensions}
\label{ss:extensions}
{\Ours} can be extended in a number of directions, which we briefly discuss here.
First, as alluded to in Section \ref{ss:preliminaries}, to cater for multi-class classifiers, one can stack several binary classifiers.
That is, fairness can be ensured across all outcomes by considering all possible one-versus-rest binary classifiers.

Second, our fairness definitions (and mechanism) can be generalized for more than two sub-groups.
For example, a company may want to consider fairness between under-18s, adults, and over-65s.
In this case, Definition \ref{def:repr} can be formalized such that $X$ only offers fair representation if $|p_X-p_i|\leq \epsilon$ for all protected sub-groups.
A revised Definition \ref{def:rankfair} follows naturally.
For Definition \ref{def:actionfair}, the redefined $r$ must consider fairness between multiple sub-groups.
For $n$ sub-groups, the group-level recourse fairness ratio is:
\begin{equation}
\textstyle
    r = \frac{\min(M_1, ..., M_n)}{\max(M_1, ..., M_n)}
\end{equation}

Finally, {\Ours} can also be applied to non-linear models (e.g., neural networks) when recourse-unaware methods are used to compute counterfactuals and initial rankings. 
Although methods are yet to be defined that can declare unfairness of recourse in non-linear settings~\cite{ustun2019actionable}, doing so, and incorporating recourse weights to determine these rankings, are interesting research challenges.

\section{Experimental Evaluation}
\label{s:experiments}
We evaluate {\Ours} using three real-world datasets.
We first examine 
the recourse fairness of both the initial data and the post-processed data using the current `fair classifiers'~(Section~\ref{ss:glrfa}).
We then study the effectiveness of {\Ours} at generating counterfactuals~(\ref{ss:expts-counterfactuals}) and in re-ranking~(\ref{ss:expts-rerank}), and compare them with competitive baselines.
This is followed by an analysis of {\Ours}-Block~(\ref{ss:expts-block}).

All code is written in Python 3 and we use Gurobi for the optimization problem of counterfactual generation. 
Experiments are conducted on macOS 10.15 with 2.4 GHz CPU and 8 GB RAM.

\subsection{Data}
\label{ss:data}
We apply {\Ours} to the challenge of automated customer credit and liability decision-making using the `German Credit' \cite{Dua:2019}, `Default of Credit Card Clients' (DC3) \cite{Dua:2019}, and HMDA \cite{HMDA} datasets.  
The German Credit dataset contains 1,000 records and has 20 features relating to individuals' financial and personal details, such as purpose of loan, missed payments, and marital status. 
A binary class label indicates individuals who are (un)successful in their loan application.
The DC3 dataset has 30,000 records and 24 features, and class labels indicating individuals who default on their payments. 
For both datasets, we convert categorical attributes
into actionable numerical attributes (e.g., length of employment) or binarize them (e.g., has guarantor), and aggregate some columns (e.g., months with zero balance) to construct more relevant features that can easily reflect infeasible recourse weights. 
After these modifications, the German Credit and DC3 datasets have 18 and 10 features, respectively. 
The HMDA dataset is based on the Home Mortgage Disclosure Act by the US government, and reports public loan data that can be filtered by year, geography, financial institution, and features such as the type/purpose/duration of loan, gender/race of applicants, or property type.
The original data comprises over 25M records, with 5M of these loans being clearly labeled as accepted/rejected. 
The 93 feature columns are filtered down to 26 where only the columns that reflect aggregated data are preserved.
This large HMDA dataset is only used for scalability experiments on {\Ours}, since none of the baselines can operate on this many records.
We assume that all attributes are independent of each other in these datasets.
We consider marital status as the protected attribute from literature for German Credit and DC3~\cite{kang2021multifair,bhargava2020limeout} and use gender for HMDA (as the marital status attribute is unavailable). 
As the classifier boundary (i.e., threshold) is defined based on the accept/reject labels in the real data, we only consider this boundary. 
Alternative thresholds are possible (as are other types of classifiers).

\subsection{Group-Level Recourse Fairness Analysis}
\label{ss:glrfa}
We first examine the effect of existing post-processing techniques, which are designed to improve fairness by re-classifying data points.
`Demographic parity' ensures that the acceptance rate is equal across sub-groups. 
`Equalized odds' (and its relaxed version, `equal opportunity') instead enforces fairness only among individuals who reach similar outcomes, by equalizing false positive/negative error rates or both (`weighted') \cite{hardt2016equality,pleiss2017fairness}.
While these methods do have different objectives, we include them to indicate how existing fairness definitions do not subsume the recourse-based definition and, as shown in Table \ref{tab:postproc-classifiers}, when these methods fail to improve recourse fairness, they actually exacerbate the problem significantly.
The `Recourse Cost' column demonstrates how {\Ours}'s aggregate weighted distance measure across sub-groups is lowered compared to the initial ranking, whereas measures that shift the classifier boundary worsen this distance.
This means that disadvantaged individuals under the post-processed classifier model would face even greater difficulty in improving their classification outcome. 
{\Ours}, however, increases recourse fairness by 15\% compared to the initial data, and up to 200\% compared to the alternatives, whilst also needing fewer points to be modified. 
{\Ours-Block} reduces the cost of interventions (difference in recourse cost) by 130\% compared to {\Ours}.
This guarantees that the re-ranked list remains closer in accuracy to the original ranked list, due to looser, less granular fairness constraints.

Table \ref{tab:postproc-classifiers} also includes results for two fair-ranking methods -- FA*IR~\cite{zehlike2017fa} and FoEiR~\cite{singh2018fairness}. 
FA*IR hurts recourse fairness and can only select up to 400 points, with {\Ours} attaining a ratio that is 37\% better. 
FoEiR can only re-rank up to 50 points due to inefficiency, but uses an exposure utility measure that can slightly improve recourse, although not as well as {\Ours}. 
Average results over 50 trials are presented for both.
The smaller German Credit data 
exhibited similar characteristics and is therefore omitted.

\begin{table}[t]
\centering
\small
\caption{\mbox{Recourse cost and fairness disparities (DC3)}}
\label{tab:postproc-classifiers}
\begin{tabular}{ccccc}
\toprule
\textbf{Post-Processed} & \mrb{2}{Sub-Group} & \textbf{\# Points} & \textbf{Recourse} & \mr{2}{$\bm{r}$} \\
\textbf{Data} & & \textbf{Changed} & \textbf{Cost} &  \\
\midrule
\mr{2}{Initial} & Single    & 0 & 7.688 & \multirow{2}{*}{0.759} \\
                & Married   & 0 & 5.837 &  \\
\midrule
Demographic     & Single    & 2771 & 6.023 & \multirow{2}{*}{0.472} \\
Parity          & Married   & 7 & 2.843 & \\
 \midrule
Equalized Odds  & Single    & 474 & 6.238 & \multirow{2}{*}{0.511} \\
(false negative)& Married   & 101 & 3.189 &  \\
\midrule
Equalized Odds  & Single    & 76 & 6.296 & \multirow{2}{*}{0.542} \\
(false positive)& Married   & 4910 & 3.412 &  \\
\midrule
Equalized Odds  & Single    & 76 & 6.296  & \multirow{2}{*}{0.422} \\
(weighted)      & Married   & 4944 & 2.658 &  \\
\midrule
\mr{2}{FA*IR} & Single & 0 & 7.611 & \multirow{2}{*}{0.554} \\
                & Married   & 57 & 4.217 & \\
\midrule
\mr{2}{FoEiR-DP}& Single & 12 & 6.788 & \multirow{2}{*}{0.793} \\
                & Married   & 7 & 5.383 & \\
\midrule
\mr{2}{{\Ours}}    & Single    & 135 & 6.663 &  \multirow{2}{*}{0.876} \\
                & Married   & 23 & 5.837 & \\
\midrule
{\Ours-Block}    & Single    & 18 & 6.891 &  \multirow{2}{*}{0.847} \\
($B = 25, \tau = \tfrac{1}{3}$)        & Married   & 49 & 5.837 & \\
\bottomrule
\end{tabular}
\vspace{-0.3cm}
\end{table}

\subsection{Counterfactual Analysis}
\label{ss:expts-counterfactuals}
Table~\ref{tab:cf} compares our explainable and efficient approach against two alternatives for generating counterfactuals. 
MACE~\cite{karimi2020model} uses formal verification techniques and satisfiability solvers to generate counterfactuals, whereas AR~\cite{ustun2019actionable} enumerates feasible ``flipsets'' (counterfactuals) for recommending and auditing recourse.
We present a qualitative comparison in terms of average closeness to the training data (Euclidean distance of counterfactual to nearest neighbor), average sparsity (number of attributes modified), and average runtime (per counterfactual generated)~\cite{verma2020counterfactual}. 
{\Ours} sometimes shows higher sparsity as different recourse weights may mean that smaller modifications are made to a higher number of attributes. 
However, our resulting counterfactuals are much closer to the original distribution of points, which is ultimately more important.
The efficiency of {\Ours} is comparable with that of AR, and both outperform MACE. 
Note that MACE and AR merely identify counterfactual points but do not modify the data, classification, or ranking. 
Hence, their influence on ranking fairness can only be measured in tandem with other methods for fair ranking.

\begin{table}[t]
\small
\setlength{\tabcolsep}{0.5\tabcolsep}
\caption{Comparison of counterfactual generation methods}
\label{tab:cf}
\begin{tabular}{c|ccc|ccc}
\toprule
\textbf{CF} & \multicolumn{3}{c|}{\textbf{German}} & \mcb{3}{\textbf{DC3}} \\
\textbf{Method} & \textbf{Close.} & \textbf{Spars.} & \textbf{Time (s)} & \textbf{Close.} & \textbf{Spars.} & \textbf{Time (s)} \\ 
\midrule
MACE    & 3095.47 & 2.23 & 1.713 & 2082.83 & 3.84 & 0.642 \\
AR      &    9.47 & 1.60 & 0.003 &  192.51 & 1.65 & 0.003 \\
{\Ours}-CF    &    1.42 & 1.36 & 0.003 &   72.04 & 2.68 & 0.004 \\
\bottomrule 
\end{tabular}
\vspace{-0.5cm}
\end{table}

\subsection{Re-Ranking Analysis}
\label{ss:expts-rerank}

We now consider {\Ours}'s effectiveness at re-ranking $\mathcal{D}$ to achieve improved recourse fairness. 
Before doing so, we introduce the user-specified `tolerance' parameter, $\tau$, which is used to determine the value for $\epsilon$, where $\epsilon = \tau \times p$. 
Our default setting is $\tau=\tfrac{1}{3}$.
For example, if 30\% of the database belongs to a protected sub-group and $\tau=\tfrac{1}{3}$, then $\epsilon = 30 \times \tfrac{1}{3} = 10\%$.
This means that the proportion of protected individuals in each block should be in the range 30$\pm$10\%.

We compare {\Ours} with state-of-the-art fair ranking baselines, where points are originally ranked according to their distance to counterfactuals, as computed by each of the counterfactual generation methods from Section \ref{ss:expts-counterfactuals}.
FA*IR~\cite{zehlike2017fa} selects the top-$k$ points ranked by relevance scores, to satisfy demographic parity without compromising on selection utility. FoEiR~\cite{singh2018fairness} balances exposure allocation in terms of three different fairness definitions (demographic parity, DP; disparate treatment, DT; and disparate impact, DI).
Since FA*IR was infeasible on larger datasets, we report the average values over 50 trials, sampling 400 points each time. 
Empirically, we find that setting $\alpha$, which is a FA*IR-specific parameter, to 0.15 offers the best results and we use this setting throughout.
Similarly, FoEiR could only handle 50 points, and so we similarly obtain average results after 50 trials.
In comparison, {\Ours} runs efficiently (<2.5 seconds) on both full datasets, and it is 4x faster than FoEiR on its smaller sample.

We evaluate these results with respect to different ranking quality metrics (normalized discounted KL-divergence, rKL; normalized discounted difference, rND; normalized discounted ratio, rRD)~\cite{yang2017measuring}. 
rND indicates the difference between the protected sub-group's proportion among top-$k$ records and overall, while rKL uses Kullback-Leibler divergence for the expectation of this difference. 
rRD computes a similar difference to rND but between the ratios of the (minority) protected sub-group to the majority.

\begin{table}[t]
\setlength{\tabcolsep}{0.9\tabcolsep}
\caption{Comparison of fair ranking methods}
\label{tab:rank}
\small
\begin{tabular}{cc|ccc|ccc}
\toprule
\multicolumn{2}{c|}{\textbf{Method for...}} & \multicolumn{3}{c|}{\textbf{German}} & \multicolumn{3}{c}{\textbf{DC3}} \\
\textbf{CF} & \textbf{Ranking} & \textbf{rKL} & \textbf{rND} & \textbf{rRD} & \textbf{rKL} & \textbf{rND} & \textbf{rRD}\\ 
\midrule
\multirow{5}{*}[-1ex]{\rotatebox[origin=c]{90}{\textbf{MACE}}}
& Initial  & 0.051 & 0.194 & \textbf{0.000} & 0.129 & \textbf{0.248} & \textbf{0.042} \\
& FA*IR    & 0.051 & 0.196 & \textbf{0.000} & 0.124 & 0.249 & \textbf{0.042} \\
& FoEiR-DP & 0.049 & 0.188 & 0.009 & 0.070 & 0.229 & 0.061 \\
& FoEiR-DT & 0.049 & 0.188 & 0.009 & \textbf{0.069} & 0.229 & 0.061 \\
& FoEiR-DI & 0.048 & 0.186 & 0.009 & \textbf{0.069} & 0.229 & 0.061 \\
& {\Ours-Rk}   & \textbf{0.048} & \textbf{0.185} & \textbf{0.000} & 0.129 & \textbf{0.248} & \textbf{0.042} \\
\midrule
\multirow{5}{*}[-1ex]{\rotatebox[origin=c]{90}{\textbf{AR}}}
& Initial  & 0.054 & 0.223 & \textbf{0.000} & \textbf{0.013} & 0.112 & 0.223 \\
& FA*IR    & 0.066 & 0.265 & \textbf{0.000} & 0.021 & 0.125 & \textbf{0.205} \\
& FoEiR-DP & 0.080 & 0.231 & 0.035 & 0.038 & 0.183 & 0.246 \\
& FoEiR-DT & 0.080 & 0.232 & 0.035 & 0.038 & 0.183 & 0.246 \\
& FoEiR-DI & 0.079 & 0.229 & 0.039 & 0.036 & 0.185 & 0.247 \\
& {\Ours-Rk}   & \textbf{0.051} & \textbf{0.218} & \textbf{0.000} & \textbf{0.013} & \textbf{0.111} & 0.221 \\
\midrule
\multirow{5}{*}[-1.5ex]{\rotatebox[origin=c]{90}{\textbf{{\Ours}-CF}}}
& Initial  & 0.026 & 0.085 & 0.121 & 0.025 & \textbf{0.153} & 0.299 \\
& FA*IR    & 0.118 & 0.316 & 0.033 & 0.026 & 0.148 & \textbf{0.262} \\
& FoEiR-DP & 0.052 & 0.165 & 0.170 & 0.039 & 0.190 & 0.276 \\
& FoEiR-DT & 0.052 & 0.165 & 0.170 & 0.041 & 0.190 & 0.276 \\
& FoEiR-DI & 0.052 & 0.172 & 0.168 & 0.040 & 0.191 & 0.273 \\
& {\Ours-Rk}   & \textbf{0.005} & \textbf{0.047} & \textbf{0.032} & \textbf{0.023} & \textbf{0.153} & 0.298 \\
\bottomrule
\end{tabular}
\end{table}


Table~\ref{tab:rank} shows that {\Ours} generally outperforms the alternatives on both datasets,
consistently maintaining or improving these ranking metrics compared to the initial ranked list.
FoEiR generally achieves better scores with the recourse-unaware MACE, while FA*IR performs comparatively well with AR. 
{\Ours} allows any combination of techniques to be applied for re-ranking.


\begin{figure}[t]
    \centering
     \begin{subfigure}[b]{\columnwidth}
        \centering
        \includegraphics[height = .5cm]{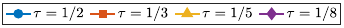}
    \end{subfigure}
    \\
    \vspace{0.1cm}
    \begin{subfigure}[b]{0.49\columnwidth}
        \centering
        \includegraphics[width = \textwidth]{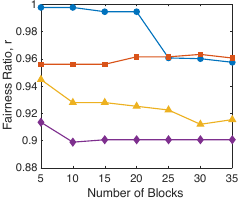}
        \caption{German Credit dataset}
        \label{fig:german-blocks}
    \end{subfigure}
    \hfill
    \begin{subfigure}[b]{0.49\columnwidth}
        \centering
        \includegraphics[width = \textwidth]{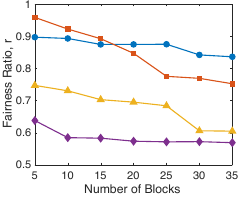}
        \caption{DC3 dataset}
        \label{fig:dc3-blocks}
    \end{subfigure}
    \vspace{-.2cm}
    \caption{Variation of group-level recourse fairness with $\bm{\tau}$ and $\bm{B}$. Note the false origins in both plots.}
    \label{fig:plots-blocks}
    \vspace{-.2cm}
\end{figure}

\subsection{Block-Based Re-Ranking Analysis}
\label{ss:expts-block}
We now examine the effectiveness of performing block-based re-ranking by considering different granularities and strictness. 
We assume all blocks are equal in size with block size being governed by the number of blocks into which the ranked list is sub-divided.
%

Figure~\ref{fig:plots-blocks} shows how group-level recourse fairness changes with the number of blocks and the tolerance.
Recourse fairness decreases as the number of blocks increases as there is a more restricted range of values into which attributes can be perturbed for re-ranking. 
{\Ours}-Block is relatively stable to an increase in $B$, although there is an expected decrease in $r$ as it becomes harder to satisfy the fairness constraints for small blocks. 
Similarly, while low $\tau$-values lead to recourse fairness being more strictly enforced within each block, a very low $\tau$-value results in increased unfairness by forcing corrections to more blocks (especially those encountered early). 

We also study the effectiveness of our re-ranking approach by considering the number of unfair blocks that are transformed into fair blocks through re-ranking (Table~\ref{tab:effectiveness}). 
As expected, with high tolerance and very few (large) blocks, the dataset is often already deemed to be fair or it is easily made fair. 
As the number of blocks is increased and each block thereby becomes smaller, re-ranking has a greater impact. 
Notably, it is more effective to reduce the size of blocks than to reduce the tolerance, as a lower tolerance leads {\Ours}-Block to be less likely to create fair blocks.

Figure \ref{fig:married-plots} shows the proportion of the two sub-groups in each block before and after re-ranking ($B = 20, \tau = \tfrac{1}{8}$, DC3).
The allowable tolerance margin around the ideal proportion, based on the overall dataset, is also indicated.
Given the tolerance bound, the original dataset shows that nine of the 20 blocks are unfair initially with only five blocks remaining unfair after re-ranking. 
This highlights the impact of using a stricter tolerance as more unfair blocks remain, compared to that at lower $\tau$ values. 
However, these blocks become concentrated at the bottom of the ranking, with the last block in particular becoming more unfair as there is no opportunity for its fairness to be corrected.
With a looser tolerance, earlier blocks containing slightly unfair representations can be ignored during re-ranking, which prevents later blocks from accumulating highly imbalanced outliers.

In terms of runtime, when $B$ is low and/or $\tau$ is large, {\Ours}-Block runs in less than two seconds on the larger DC3 dataset.
As $B$ increases and/or $\tau$ decreases, runtime increases by up to nine times -- 14.3s ($B = 35, \tau = \tfrac{1}{5}$) vs. 1.7s ($B = 5, \tau = \tfrac{1}{2}$).
This is to be expected as the mechanism needs to make more satisfiability checks due to the tighter constraints and the larger number of blocks.
Moreover, {\Ours}-Block is nearly 30x faster, while changing the fairness ratio by no more than 10\% (up to $B = 25$ and $\tau = \tfrac{1}{3}$ on DC3).
\begin{table}[t]
\centering
\small
\caption{Effectiveness of {\Ours}-Block (DC3); initial and final number of unfair blocks shown; number in brackets denotes number of blocks made fair}
\label{tab:effectiveness}
\begin{tabular}{c|cccc}
\toprule
\multirow{2}{*}{$\bm{B}$} & \multicolumn{4}{c}{\textbf{Tolerance, $\bm{\tau}$}}\\
& \textbf{1/2} & \textbf{1/3} & \textbf{1/5} & \textbf{1/8} \\
\midrule
 5 & $0\rightarrow0$  (0) & $1\rightarrow0$    (1) & $2\rightarrow1$    (1) &   $1\rightarrow1$  (0)\\

10 & $0\rightarrow0$  (0) & $3\rightarrow0$    (3) & $5\rightarrow1$    (4) &   $2\rightarrow2$  (0)\\

15 & $2\rightarrow0$  (2) & $6\rightarrow1$    (5) & $8\rightarrow2$    (6) &   $5\rightarrow4$  (1)\\

20 & $2\rightarrow0$  (2) & $8\rightarrow1$    (7) & $10\rightarrow2$   (8) &   $9\rightarrow5$  (4)\\

25 & $3\rightarrow0$  (3) & $13\rightarrow1$  (12) & $12\rightarrow3$   (9) & $14\rightarrow7$  (7)\\

30 & $6\rightarrow0$  (6) & $13\rightarrow1$  (12) & $14\rightarrow3$  (11) & $13\rightarrow9$  (4)\\

35 & $7\rightarrow0$  (7) & $14\rightarrow1$  (13) & $19\rightarrow5$  (14) & $21\rightarrow11$  (10)\\
\bottomrule
\end{tabular}
\vspace{-0.0cm}
\end{table}

\begin{figure}[tb]
    \centering
     \begin{subfigure}[b]{0.8\columnwidth}
        \centering
        \includegraphics[width = \textwidth]{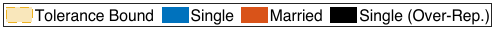}
    \end{subfigure}
    \\
    \begin{subfigure}[b]{0.49\columnwidth}
        \centering
        \includegraphics[height = 4cm]{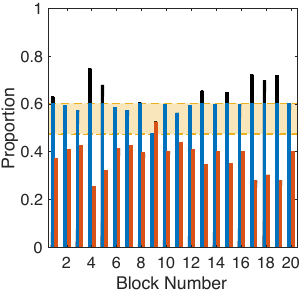}
        \caption{Before}
        \label{fig:married-pre}
    \end{subfigure}
    \hfill
    \begin{subfigure}[b]{0.47\columnwidth}
        \centering
        \includegraphics[height = 4cm]{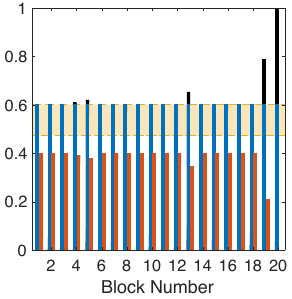}
        \caption{After}
        \label{fig:married-post}
    \end{subfigure}
    \vspace{-0.3cm}
    \caption{Group proportions before and after re-ranking}
    \label{fig:married-plots}
    \vspace{-.3cm}
\end{figure}

\subsection{Scalability Analysis}
\label{ss:scalability}
Finally, we demonstrate that {\Ours}-Block runs efficiently on large-scale data such as the HMDA dataset, with performance results that remain stable even as the number of records increases.
None of the re-ranking alternatives (e.g., FA*IR and FoEiR) could handle large datasets, so they are excluded here.
We sample between 2,500 and 1M records HMDA dataset to maintain an initial fairness ratio of 0.83.
We run {\Ours}-Block using a fixed block size of 100 records (i.e., the total number of blocks is variable) on these samples, presenting results averaged over 10 trials. Figure~\ref{fig:intial-final} shows that the final fairness ratio attained is at least 0.95 and does not exhibit much variability as dataset size increases. 
The ranking quality metrics (Figure~\ref{fig:eval-measures}), which are high when the number of records is small, quickly decrease and remain low for larger datasets.

\begin{figure}[t]
    \centering
    \begin{subfigure}[b]{0.49\columnwidth}
        \centering
        \includegraphics[width = \textwidth]{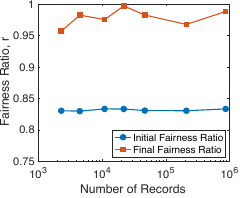}
        \caption{Fairness Ratio}
        \label{fig:intial-final}
    \end{subfigure}
    \hfill
    \begin{subfigure}[b]{0.49\columnwidth}
        \centering
        \includegraphics[width = \textwidth]{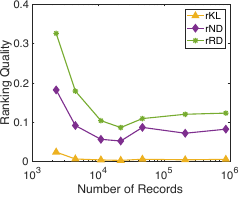}
        \caption{rKL, rND, and rRD}
        \label{fig:eval-measures}
    \end{subfigure}
    \vspace{-.2cm}
    \caption{Effect on recourse fairness and ranking quality as the number of records in the dataset varies}
    \label{fig:varying-rows}
\vspace{-0.1cm}
\end{figure}

These results show that our solution can easily handle any arbitrarily large dataset that is sub-divided into smaller blocks within which group-level recourse fairness is being enforced. 
Our method essentially uses sliding window processing, as a block only requires the next adjoining block of candidate records for {\Ours}-Block to improve its fairness (see Algorithm~\ref{alg:block-based} and Section \ref{sss:block-based-rerank}).

\section{Conclusion}

{\Ours} is a solution for improving group-level fairness in ranked lists with respect to both representation- \textit{and} recourse-based constraints.
{\Ours} performs iterative computations to identify feasible recourse actions that require minimal cost and can result in a fairer ranked list. 
A block-based extension can handle adjustable granularities and multiple target points or boundaries.
While these new recourse-based approaches have recently been considered more in the context of societal fairness where appropriate policy measures are in place for disadvantaged groups, they are more broadly applicable to technical optimization tasks where fairness is a constraint or part of the objective, such as resource allocation in computer systems or distributing funding to organizations. 
We leave these applications to future work.

\begin{acks}
This work is supported in part by the UK Engineering and Physical Sciences Research Council under Grant No. EP/L016400/1.
Aparajita is supported via a Feuer International Scholarship in Artificial Intelligence.
We thank Efehan Madran, Aaron MacFarlane, and Amina Tkhamokova for their valuable contributions.
\end{acks}

\clearpage
\balance
\bibliographystyle{ACM-Reference-Format}
\bibliography{sample-base}


\end{document}